\newtheorem{theorem}{Proposition}
\ifwacvfinal\pagestyle{empty}\fi
\begin{document}

\title{ViP: Virtual Pooling for Accelerating CNN-based \\ Image Classification and Object Detection}

\author{Zhuo Chen \hspace{1cm} Jiyuan Zhang \hspace{1cm}  Ruizhou Ding\\
Carnegie Mellon University\\
{\tt\small zhuoc1, jiyuanz, rding@andrew.cmu.edu}
\and
Diana Marculescu \\
University of Texas at Austin \& Carnegie Mellon University\\
{\tt\small dianam@utexas.edu \& dianam@cmu.edu}
}

\maketitle
\ifwacvfinal\thispagestyle{empty}\fi

\begin{abstract}
   In recent years, Convolutional Neural Networks (CNNs) have shown superior capability in visual learning tasks. While accuracy-wise CNNs provide unprecedented performance, they are also known to be computationally intensive and energy demanding for modern computer systems. In this paper, we propose Virtual Pooling (ViP), a model-level approach to improve speed and energy consumption of CNN-based image classification and object detection tasks, with a provable error bound. We show the efficacy of ViP through experiments on four CNN models, three representative datasets, both desktop and mobile platforms, and two visual learning tasks, \textit{i.e.}, image classification and object detection. For example, ViP delivers \textbf{2.1x} speedup with less than \textbf{$1.5\%$} accuracy degradation in ImageNet classification on VGG16, and \textbf{1.8x} speedup with $0.025$ mAP degradation in PASCAL VOC object detection with Faster-RCNN. ViP also reduces mobile GPU and CPU energy consumption by up to \textbf{55\%} and \textbf{~70\%}, respectively. As a complementary method to existing acceleration approaches, ViP achieves \textbf{1.9x} speedup on ThiNet leading to a combined speedup of \textbf{5.23x} on VGG16. Furthermore, ViP provides a knob for machine learning practitioners to generate a set of CNN models with varying trade-offs between system speed/energy consumption and accuracy to better accommodate the requirements of their tasks. Code is available at https://github.com/cmu-enyac/VirtualPooling. 
\end{abstract}
\vspace{-20pt}
\section{Introduction}
\label{introduction}
\vspace{-5pt}
\begin{figure}[htb!]
	\centering
	\includegraphics[width=.48\textwidth]{./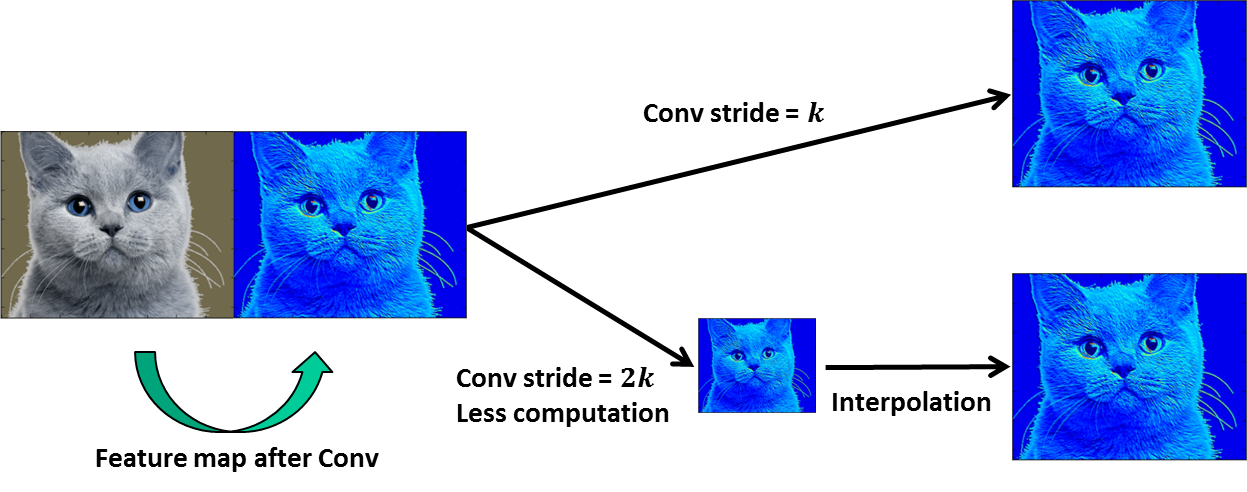}
	\vspace{-15pt}
	\caption{Illustration of virtual pooling \cite{matviz}. By using a larger stride, we save computation in conv layers and, to recover the output size, we use linear interpolation which is fast to compute.}
	\label{fig:vip-viz}
	\vspace{-15pt}
\end{figure}

Deep Convolutional Neural Networks (CNNs) have gained tremendous traction in recent years thanks to their outstanding performance in visual learning tasks, \textit{e.g.}, image classification and object detection \cite{vgg16,chen2018understanding,faster-rcnn}. However, CNNs are often considered very computationally intensive and energy demanding \cite{lebedev2014speeding,han2015learning,cai2017neuralpower}. With the prevalence of mobile devices, being able to run CNN-based visual tasks efficiently, in terms of both speed and energy, becomes a critical enabling factor of various important applications, \textit{e.g.}, augmented reality, self-driving cars, Internet-of-Things, \textit{etc}, which all heavily rely on fast and low energy CNN computation. To alleviate the problem, engineers and scientists proposed various solutions, including sparsity regularization, connection pruning, model quantization, low rank approximation, \textit{etc}. In this work, we propose an complementary approach, called \textit{Virtual Pooling} (ViP), which takes advantage of pixel locality and redundancy to reduce the computation cost originating from the most computationally expensive part of CNN: convolution layers (conv layers). As illustrated in Fig.\ref{fig:vip-viz}, ViP reduces computation cost by computing convolution with a larger (2x) stride size. While naturally this operation quickly shrinks the output feature map, and thus can only be done a few times before the image vanishes, we overcome this problem by recovering the feature map via \textit{linear interpolation} with provable error bound. The succeeding layer hence observes the same size of input with or without ViP, and no architectural change is needed. Our experimental results on different CNN models and tasks show that we can achieve \textbf{2.1x} speedup with \textbf{$1.5\%$} accuracy degradation in image classification, compared to the 1.9x speedup with \textbf{$2.5\%$} drop from the prior work \cite{perforatedcnns}, and \textbf{1.8x} speedup with $0.025$ mAP drop in object detection. 

\section{Related Work and Contributions}
\label{related}
There are several prior works targeting CNN acceleration \cite{han2015learning,wen2016learning,perforatedcnns,lightnn}. Model compression \cite{han2015learning,wen2016learning,he2018soft,he2018amc} is a popular approach of reducing CNN memory requirement and runtime via weight pruning. \cite{han2015learning} proposed to prune connections and finetune the network progressively which results in high compression rate. However, due to the non-structured sparsity generated by this method, it also needs specialized hardware to realize high speedup \cite{eie}. In light of this, \cite{wen2016learning} used group lasso to generate structured sparsity and speed up CNNs on general-purpose processors. 

CNN model binarization or quantization methods \cite{binaryconnect,binarynet,lightnn,wu2016quantized,incre-net-quant,ternary-quant} quantize CNN weights and/or activations into low-precision fewer-bit representations. Thereafter, they are able to both reduce memory cost and speedup computation by using efficient hardware units. \cite{binaryconnect} uses binary weights rather than continuous-valued weights in CNN models, which is not only able to save memory space, but also greatly speedup convolution via replacing multiply-accumulate operations by simple accumulations. Ding \textit{et al.,} \cite{lightnn} reduces the number of bits of CNN weights through its binary representation, which can be sped up by using shift-add operation rather than expensive multipliers on hardware. \cite{binarynet,xnor-net} further quantize the CNN intermediate activations, resulting in both binary weight and input, which can be further accelerated via efficient XNOR operation.

Low rank approximation methods \cite{jaderberg2014speeding,lebedev2014speeding,denton2014exploiting} speed up convolution computation by exploiting the redundancies of the convolutional kernel using low-rank tensor decompositions. The original conv layer is then replaced by a sequence of conv layers with low-rank filters, which have a much lower total computational cost. \cite{jaderberg2014speeding} exploit cross-channel or filter redundancy to construct rank-one basis of filters in the spatial domain. \cite{lebedev2014speeding} use non-linear least squares to compute a low-rank CP-decomposition of the filters into fewer rank-one tensors and then finetune the entire network. 

The closest work to ours is PerforatedCNNs \cite{perforatedcnns} which, inspired by the idea of loop perforation \cite{sidiroglou2011managing}, reduces the computation cost in conv layers by exploiting the spatial redundancy. Nevertheless, PerforatedCNNs use a dataset dependent method to generate an irregular output mask that determines which neuron should be computed exactly. In addition, PerforatedCNNs need a mask at runtime (hence introducing overhead) to determine the value for interpolation. In contrast, ViP only depends on the intermediate activations of the CNN layer without extra parameters. Finally, PerforatedCNNs also consider the use of a pooling-structured mask, but that can only be applied to the layers immediately preceding a pooling layer; also, the associated interpolation method is nearest neighbor. In contrast, our method can be applied to any conv layer in the network. Furthermore, we show that the ViP method achieves higher speedup with lower accuracy degradation. To the best of our knowledge, our work makes the following contributions:
\vspace{-5pt}
\begin{enumerate}
	\setlength\itemsep{0em}
	\item We are the first to propose and implement the Virtual Pooling (ViP) method with provable error bound. ViP is independent of the dataset and can be applied to accelerate any conv layer. 
	\item Plug-and-play: ViP is a self-contained custom layer. Without modifying the deep learning framework, it works simply by doubling the stride of the conv layer and inserting the ViP layer after it. 
	\item ViP can be combined with existing model acceleration methods, \textit{e.g.}, model compression, quantization, \textit{etc}., to squeeze more performance out of the CNN models.
	\item More than providing a single CNN configuration, ViP generates a set of models with varying speedup/energy and accuracy trade-offs from which a machine learning practitioner can select for the task at hand.
	\item Most CNN acceleration techniques consider only the image classification task, while they lack evidence on how their performance may translate to the object detection task, which has its own unique properties. In this work, we conduct experiments to show that ViP also works well under the state-of-the-art faster-rcnn object detection framework. 
\end{enumerate}
\vspace{-5pt}
The remainder of this paper is organized as follows. Section \ref{Method} introduces the details of the virtual pooling method. In Section \ref{results}, we conduct extensive experiments with different CNN models on both desktop and mobile platforms, and we apply ViP to speed up both image classification and object detection tasks. Finally, we conclude our work in Section \ref{conclusion}.
\vspace{-15pt}

\section{Methodology}
\label{Method}
Virtual Pooling (ViP) relies on the idea of reducing CNN computation cost by taking advantage of pixel spatial locality and redundancy. CNNs are often comprised of multiple conv layers interleaved with pooling layers. Pooling layers are considered essential for reducing spatial resolution such that computation cost is reduced and robustness to small distortions in images is enhanced. However, the widely-used stride-two non-overlapping pooling method \cite{vgg16,resnet} reduces image size by half in each of the two dimensions, and thus quickly shrinks the image. As a result, the maximum number of pooling operations that can be done in a CNN is limited by the size of the input image. For example, an input image of size $224*224$ is shrunk to size $7*7$ after only five pooling layers in VGG16, while the current state-of-the-art CNNs usually have several tens to hundreds of layers \cite{vgg16,resnet}. There is an opportunity to reduce computation further if we can bridge the gap between the number of pooling-like operations we can do and the number of layers in the network. 

\subsection{ViP Layer}
To this end, we propose ViP, a method that can maintain the output size of each layer, while using a larger-stride convolution. Consequently, we can have as many ViP layers as possible while not encountering the problem of diminishing image size in the real pooling operation. While it is possible to increase the stride of an early layer and remove a later pooling layer to achieve a similar effect, our experiments show that ViP is consistently better than pooling removal with 1.42\% higher accuracy on average. Furthermore, this method can only reduce computation in consecutive conv layers prior to pooling, while ViP works in any order (as we will show later, accuracy sensitivity is non-monotonic with the network layer) which gives a better accuracy-speedup curve. As shown in Fig.\ref{fig:vip-viz}, ViP saves computation by performing a larger stride convolution in the layer before ViP, and then recovers the output size by linear interpolation which is very computationally efficient. For example, we can double the stride of all conv layers in VGG16 to reduce computation, while the succeeding layer observes the input of exactly the same size after linear interpolation. The theoretical speedup of this approach is 4x as we halve the number of convolutions in two dimensions. Though transposed convolution (Deconv) \cite{long2015fully} is also an upsampling method, to speedup the network, its overhead must be sufficiently small so it does not offset the reduced latency. In the supplementary material we show that ViP is very efficient and its computation is only 0.016\% of Deconv.

To be more specific, let's use $\mathcal{I}$ to denote the input to conv layer and $\mathcal{O}$ to denote the output. Without loss of generality, although $\mathcal{I}$ and $\mathcal{O}$ are often four-dimensional, we omit the first dimension of batch index because ViP applies to all images in the batch independently, and therefore, $\mathcal{I}_{c,h,w}$ and $\mathcal{O}_{c,h,w}$ both have three dimensions: channel $c \in [1,C]$, height $h \in [1, H]$, and width $w \in [1, W]$. We consider convolution filters, $\mathcal{W}_{c',c,m,n}$, with the same height and width with odd values $M$ as are commonly used in CNNs \cite{vgg16,resnet}, and with $c'$ representing the index of the filter. For the purpose of simplicity, we further assume $H$ and $W$ are even numbers, \textit{e.g.}, input image size of ImageNet is usually $224*224$, and in the case of odd numbers, we have special cases only on the boundaries of the image that are easy to deal with. Furthermore, we use $\mathcal{O}_{c',h,w}^{Orig}$ to represent the output of the original stride-$s$ convolution without ViP, and $\mathcal{O}_{c',h,w}^{ViP}$ to denote the output of using ViP method, \textit{i.e.}, the output of stride-$2s$ convolution plus linear interpolation. A smaller $||\mathcal{O}_{c',h,w}^{Orig}-\mathcal{O}_{c',h,w}^{ViP}||_2$ indicates a smaller perturbation of the truth output and hence, less accuracy degradation for the ViP method. According to the definition of convolution:
\vspace{-10pt}
\begin{equation}
\small 
\mathcal{O}_{c',h,w}^{Orig} = \sum_{c=1}^{C} \sum_{m,n=-\lfloor \frac{M}{2} \rfloor}^{\lfloor \frac{M}{2} \rfloor} \mathcal{I}_{c, s\cdot h-m, s\cdot w-n} * \mathcal{W}_{c',c,m,n}
\vspace{-5pt}
\end{equation}
If we double the stride, we have an output with reduced size:
\vspace{-10pt}
\begin{equation}
\small
\mathcal{O}_{c',h,w}^{Red} = \sum_{c=1}^{C} \sum_{m,n=-\lfloor \frac{M}{2} \rfloor}^{\lfloor \frac{M}{2} \rfloor} \mathcal{I}_{c, 2s\cdot h-m, 2s\cdot w-n} * \mathcal{W}_{c',c,m,n}
\end{equation}
For ease of explanation, we use an auxiliary function $\mathcal{O}_{c',h,w}^{Zero}$ which is zero-spaced to enlarge $\mathcal{O}_{c',h,w}^{Red}$ to the same size of $ \mathcal{O}_{c',h,w}^{Orig}$ in the following way:
\begin{equation}\label{eq:zero-reduce}
\mathcal{O}_{c',h,w}^{Zero} = \begin{cases}
\mathcal{O}_{c',h/2,w/2}^{Red} &\text{$h,\ w$ are even numbers}\\
$0$ &\text{Otherwise}
\end{cases}
\end{equation}
We approximate the output with the ViP method $\mathcal{O}_{c',h,w}^{ViP}$ by using the mean of its immediate non-expanding-zero neighbors (including itself, if computed exactly) in $\mathcal{O}_{c',h,w}^{Zero}$:
\begin{equation}
\mathcal{O}_{c',h,w}^{ViP} = \frac{\sum_{m,n=-1}^{1} \mathcal{O}_{c',h+m,w+n}^{Zero}}{\sum_{m,n=-1}^{1} \mathds{1}(\mathcal{O}_{c',h+m,w+n}^{Zero} \neq 0)}
\end{equation}
This is actually a convolution with $3*3$ filters, but with variable weight values depending on the number of non-expanding-zero neighbors.
We can simplify the above computation by considering four cases similar to Eq.\ref{eq:zero-reduce}:
\begin{equation} \label{eq:vip-reduce}
\small
\mathcal{O}_{c',h,w}^{ViP} = \begin{cases}
\mathcal{O}_{c',h/2,w/2}^{Red} &\text{$h$ even, $w$ even}\\
\frac{1}{2}(\mathcal{O}_{c',\lfloor h/2 \rfloor,w/2}^{Red}+\mathcal{O}_{c',\lceil h/2 \rceil,w/2}^{Red}) &\text{$h$ odd, $w$ even} \\
\frac{1}{2}(\mathcal{O}_{c',h/2,\lfloor w/2 \rfloor}^{Red}+\mathcal{O}_{c',h/2,\lceil w/2 \rceil}^{Red}) &\text{$h$ even, $w$ odd} \\

\frac{1}{4}(\sum_{\substack{h=\lfloor h/2 \rfloor \ or\ \lceil h/2 \rceil\\ 
		w=\lfloor w/2 \rfloor\ or\ \lceil w/2 \rceil}} \mathcal{O}_{c',h,w}^{Red}) &\text{$h$ odd, $w$ odd} 
\end{cases}
\end{equation}
The above equations are embarrassingly parallel and hence fast to compute on GPU. We implemented our custom ViP layer based on Eq.\ref{eq:vip-reduce}.

We can further provide an error bound, considering the case where we apply ViP to layer $l_s$. 
\vspace{-5pt}
\begin{theorem}
	Assume the output of layer $l_s$ (hence input to layer $l_{s+1}$), $\mathcal{O}^{(l_s)}$, is L-Lipschitz continuous \cite{li2017training} on height and width dimensions $(h,w)$, \textit{i.e.}, \\ $|\mathcal{O}^{(l_s)}_{c,h_1,w_1}-\mathcal{O}^{(l_s)}_{c,h_2,w_2}|\le L||(h_1,w_1)-(h_2,w_2)||_2$, for $\forall h_1,h_2\in[1,H], w_1,w_2\in[1,W]$. \\ Assume that $\forall c',l$, the $c'$-th convolutional filter of the $l$-th layer, denoted as $\mathcal{W}^{(l)}_{c'}$, has a bounded $l_2$-norm: $||\mathcal{W}^{(l)}_{c'}||_2=\sqrt{mean^2(\mathcal{W}^{(l)}_{c'})+std^2(\mathcal{W}^{(l)}_{c'})}\le B^{(l)}$. Then, the $l_2$-norm of the output error is bounded by:
	\begin{equation} \label{eq:bound}
	\begin{split}
	& ||\mathcal{O}^{(l_e)ViP} - \mathcal{O}^{(l_e)Orig}||_2 \\
	\le & \sqrt{2}L\sqrt{C'^{(l_e)}H^{(l_e)}W^{(l_e)}}\prod_{l=l_s+1}^{l_e}\sqrt{C^{(l)}}M^{(l)}B^{(l)},\\
	\end{split}
	\end{equation}
	where $C^{(l)}$ and $M^{(l)}$ are the number of input channels and kernel size of the $l$-th layer, respectively, and $C'^{(l_e)}$ is the number of output channels of the $l_e$-th layer.
\end{theorem}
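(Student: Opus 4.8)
The plan is to prove the bound in two stages: first control the error that ViP introduces at the layer $l_s$ where it is applied, and then track how that error is amplified as it propagates forward through the convolutions of layers $l_s+1,\dots,l_e$. Because the right-hand side carries the spatial size $\sqrt{C'^{(l_e)}H^{(l_e)}W^{(l_e)}}$ of the \emph{last} layer rather than of $l_s$, the natural quantity to induct on is the per-element (root-mean-square) error $\rho^{(l)} := ||\mathcal{O}^{(l)ViP}-\mathcal{O}^{(l)Orig}||_2 / \sqrt{C'^{(l)}H^{(l)}W^{(l)}}$. I would show $\rho^{(l_s)}\le\sqrt{2}L$, then that each layer multiplies $\rho$ by at most $\sqrt{C^{(l)}}M^{(l)}B^{(l)}$, and finally re-multiply by $\sqrt{C'^{(l_e)}H^{(l_e)}W^{(l_e)}}$ to recover the stated $||\cdot||_2$ bound.

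For the base case I would bound the interpolation error entry-by-entry using the four cases of Eq.\ref{eq:vip-reduce}. Since $\mathcal{O}^{Red}_{a,b}=\mathcal{O}^{Orig}_{2a,2b}$, the even--even entries are reproduced exactly (zero error), the odd--even and even--odd entries are averages of two true neighbors at distance $1$, and the odd--odd entries are averages of four true neighbors at distance $\sqrt2$. Writing each approximation error as $\tfrac12\sum(\mathcal{O}^{Orig}_{\text{neighbor}}-\mathcal{O}^{Orig}_{h,w})$ (resp.\ $\tfrac14\sum$) and applying the $L$-Lipschitz hypothesis to each difference, the triangle inequality gives a per-entry error of at most $L$ in the axis-aligned cases and at most $\sqrt2 L$ in the diagonal case. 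The diagonal case dominates, so every entry of $\mathcal{O}^{(l_s)ViP}-\mathcal{O}^{(l_s)Orig}$ is bounded by $\sqrt2 L$, whence $\rho^{(l_s)}\le\sqrt2 L$.

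For the inductive step I would write the output error of layer $l$ as the convolution of the input error with the same filters, $\mathcal{O}^{(l)ViP}_{c',h,w}-\mathcal{O}^{(l)Orig}_{c',h,w}=\sum_{c,m,n}(\mathcal{I}^{ViP}-\mathcal{I}^{Orig})_{c,sh-m,sw-n}\,\mathcal{W}^{(l)}_{c',c,m,n}$, using that the elementwise activation (ReLU) is $1$-Lipschitz and therefore never amplifies the error. Applying Cauchy--Schwarz to each output entry bounds it by the Euclidean filter norm times the local patch-error norm; converting the paper's normalized filter norm $\sqrt{mean^2+std^2}\le B^{(l)}$ into the ordinary Euclidean norm supplies exactly the factor $\sqrt{C^{(l)}}M^{(l)}B^{(l)}$, since a filter has $C^{(l)}(M^{(l)})^2$ entries. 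Summing the squared entry bounds over output channels and spatial positions and dividing by the number of output elements then yields the recursion $\rho^{(l)}\le\sqrt{C^{(l)}}M^{(l)}B^{(l)}\,\rho^{(l-1)}$; chaining this from $l_s+1$ to $l_e$ produces the product, and multiplying by $\sqrt{C'^{(l_e)}H^{(l_e)}W^{(l_e)}}$ gives Eq.\ref{eq:bound}.

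I expect the propagation step to be the main obstacle. The clean per-layer factor $\sqrt{C^{(l)}}M^{(l)}B^{(l)}$ is essentially the induced ($\ell_2\!\to\!\ell_2$) operator norm of the convolution, and bounding it sharply requires care with two effects that a naive Cauchy--Schwarz over-counts: the overlap of receptive fields (each input entry feeds several output patches) and the accumulation of error across the $C'^{(l)}$ output channels. Keeping these from introducing spurious extra factors of $M^{(l)}$ or $\sqrt{C'^{(l)}}$ --- equivalently, exploiting the spatial down-sampling between $l_s$ and $l_e$ so that the per-element (RMS) gain telescopes correctly and the size factor lands on layer $l_e$ --- is the delicate part; the base case and the final RMS-to-$\ell_2$ conversion are routine by comparison.
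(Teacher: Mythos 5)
Your base case is right and matches what the bound requires: using $\mathcal{O}^{Red}_{c',a,b}=\mathcal{O}^{Orig}_{c',2a,2b}$, the four cases of Eq.~\ref{eq:vip-reduce} give per-entry interpolation errors of $0$, $L$, $L$, and $\sqrt{2}L$ respectively, so the worst case $\sqrt{2}L$ is exactly where the leading constant comes from, and the final conversion from a per-entry bound to $\|\cdot\|_2$ supplies the $\sqrt{C'^{(l_e)}H^{(l_e)}W^{(l_e)}}$ factor. The problem is the propagation step, which is the actual content of the theorem, and you have not closed it --- you have only (correctly) diagnosed that it is the hard part. Carry out the computation you describe: Cauchy--Schwarz gives $|E^{(l)}_{c',h,w}|\le \|\mathcal{W}^{(l)}_{c'}\|_{Euc}\,\|E^{(l-1)}_{patch(h,w)}\|_2$ with $\|\mathcal{W}^{(l)}_{c'}\|_{Euc}\le\sqrt{C^{(l)}}M^{(l)}B^{(l)}$, but then summing the squares over $c',h,w$ and accounting for the fact that each input entry appears in up to $(M^{(l)})^2$ receptive fields yields $\|E^{(l)}\|_2\le\sqrt{C'^{(l)}C^{(l)}}(M^{(l)})^2B^{(l)}\|E^{(l-1)}\|_2$, i.e.\ the recursion $\rho^{(l)}\le C^{(l)}(M^{(l)})^2B^{(l)}\rho^{(l-1)}$ after RMS normalization (for stride one) --- an extra factor of $\sqrt{C^{(l)}}M^{(l)}$ per layer relative to what you need.

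This is not merely a looseness you can tighten by being more careful: the recursion $\rho^{(l)}\le\sqrt{C^{(l)}}M^{(l)}B^{(l)}\rho^{(l-1)}$ is \emph{false} as a worst-case deterministic statement. Take every filter weight equal to $B^{(l)}$ (so $\sqrt{mean^2+std^2}=B^{(l)}$) and every input-error entry equal to a constant $\epsilon$; then every output-error entry equals $C^{(l)}(M^{(l)})^2B^{(l)}\epsilon$, so the RMS gain is $C^{(l)}(M^{(l)})^2B^{(l)}$, exceeding the claimed factor by $\sqrt{C^{(l)}}M^{(l)}$. The per-layer factor $\sqrt{C^{(l)}}M^{(l)}B^{(l)}=\sqrt{C^{(l)}(M^{(l)})^2}\cdot B^{(l)}$ is the \emph{variance-propagation} gain: if the error entries feeding one output are modeled as zero-mean and uncorrelated, the output variance is $\sum_i w_i^2\,\sigma^2\le C^{(l)}(M^{(l)})^2(B^{(l)})^2\sigma^2$, which is exactly the stated recursion on the standard deviation. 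So the missing idea is an explicit decorrelation/independence assumption on the propagated error (or some equivalent averaging argument), without which the product in Eq.~\ref{eq:bound} cannot be reached by the triangle-inequality/Cauchy--Schwarz route you outline. Since you end by flagging the removal of the extra $M^{(l)}$ and channel factors as an unresolved "delicate part," the core of the proof is still open in your proposal.
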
 

\begin{proof}
	\vspace{-5pt}
	Deferred to Appendix.
	\vspace{-5pt}
\end{proof}
\vspace{-5pt}
If $\forall l>l_s, \sqrt{C^{(l)}}M^{(l)}B^{(l)} > 1$, the upper-bound will keep increasing when the output goes through multiple layers. This indicates that earlier ViP layers with more succeeding layers may have a bigger impact on the final output of the network and hence higher accuracy drop without finetuning. This actually reflects the intuition that perturbations from early layers will lead to higher error on the output as they propagate through the network. We will see this effect in both VGG16 (Fig.\ref{fig:sens-vgg}) and ResNet-50 (Fig.\ref{fig:sens-res}).
\begin{algorithm}[tb]
	\small
	\caption{Virtual Pooling (ViP)}
	\label{alg:vip}
	\begin{algorithmic}[1]
		\STATE {\bfseries Input:} model $Net$
		\STATE {\bfseries Output:} ViP model $ViPNET$, Accuracy $ViPA$, Runtime $ViPR$
		
		\STATE // Sensitivity analysis
		\STATE $i = 0$
		\STATE $ViPLayers=$[]
		\FOR{$c$ {\bfseries in} $Net.ConvLayers$}
		
		\STATE $A_c=$ evaluate($Net.ViP(c)$)
		\STATE $ViPLayers.append((c, A_c))$
		
		\ENDFOR
		\STATE $ViPLayers.sorted(key=A_c, 'descending')$

		\STATE //Progressively interpolate and finetune
		\STATE $ViPA =$ [], $ViPR =$ []
		\FOR {$j=0:len(ViPLayers)$}
		\STATE $Net = Finetune(Net.ViP(0:j))$
		\STATE $ViPA.append(evaluate(Net))$
		\STATE $ViPR.append(time(Net))$
		\ENDFOR
		
		\STATE Return $ViPNET=Net$, $ViPA$, $ViPR$
	\end{algorithmic}
\end{algorithm}
\vspace{-5pt}
\subsection{ViP Algorithm}
While speeding up CNNs can be achieved with ViP, it may also lead to some accuracy drop since interpolation is a method of approximation. Therefore, we propose the following procedure, as shown in Algorithm \ref{alg:vip}, as part of the ViP method to reduce the accuracy degradation while maximizing the speedup we can achieve. We first do sensitivity analysis to detect which layers are less sensitive, in terms of the accuracy of the network, to ViP (Line 6-9). For each conv layer $c$, we insert ViP after it, and evaluate the network accuracy $A_c$ without finetuning. The sensitivity is measured as the accuracy drop with respect to the original accuracy. Lower $A_c$ leading to a larger accuracy drop means that the layer is more sensitive to ViP, so we sort $A_c$ in descending order as shown in Line 10.
We insert ViP layer after ReLU which follows the conv layer, as both our experiments and prior work \cite{perforatedcnns} show that inserting after ReLU gives better results. Our intuition is that by applying ViP before ReLU, we obtain less activations than the original without ViP and the network becomes less likely to identify smaller activation regions. Therefore, throughout the paper, whenever we mention inserting ViP after conv layer, we mean inserting it after the ReLU layer that follows immediately. 

Based on the sorted sensitivity $ViPLayers$, we insert ViP layers progressively, and finetune the network to achieve a set of models with different speedup-accuracy trade-offs (Line 13-17). For example, we add ViP after $ViPLayers[0]$, finetune the model and obtain the first model, and then we add ViP after both $ViPLayers[0]$ and $ViPLayers[1]$, finetune the model and obtain the second model, and so on so forth. In this fashion, we will generate $len(ViPLayers)$ models ($len(ViPLayers)$ is the total number of conv layers that we apply ViP to), all with different accuracy and runtime. 
However, repetitively finetuning the model $len(ViPLayers)$ times can be quite time-consuming, especially for large CNN models. To alleviate this problem, we conduct grouped finetuning, in which we insert several ViP layers at a time (still based on sensitivity values). This results in fewer rounds, and hence less time, of finetuning, and both per-layer and grouped finetuning methods can generate different accuracy-speedup trade-offs for the baseline model. An example of applying ViP to applications, such as a face detector in a mobile camera system is given in the supplementary material.
\vspace{-5pt}
\section{Experimental Results}
\label{results}
In this section, we first describe the hardware and software setup of our experiments, and then present results to show the effectiveness of ViP method under: 
\vspace{-5pt} 
\begin{enumerate}
	\setlength\itemsep{0em}
	\item Four CNN models: VGG16 \cite{vgg16}, ResNet-50 \cite{resnet}, All-CNN \cite{allcnn}, Faster-RCNN with VGG16 backbone \cite{faster-rcnn}.
	\item Three datasets: ImageNet \cite{imagenet}, CIFAR-10 \cite{cifar10}, PASCAL-VOC \cite{pascal}.
	\item Two hardware platforms: Desktop and Mobile.
	\item Two visual learning tasks: Image classification and object detection.
\end{enumerate}
\vspace{-5pt}
\subsection {Experimental Setup}
Throughout the experiments, we use Caffe \cite{caffe} as our deep learning platform since its correctness has been validated by numerous research works. For fast training and inference, we implement a self-contained custom ViP layer in CUDA and integrate it into Caffe. The ViP layer inserts interpolated points between both columns and rows. The row and column size is doubled after interpolation and the resultant image size is enlarged four times. 
Interpolation is performed independently on points. This process is therefore embarrassingly parallel and can be easily accelerated by GPU. Each thread launched by the CUDA kernel processes one interpolated element. The thread block dimension order from fastest- to slowest-changing are column, row, channel, and batch to match the data layout in Caffe. Based on their position in the interpolated image, the points to be interpolated are classified into four types and estimated using Eq.\ref{eq:vip-reduce}.

CNNs are now widely deployed and used in both cloud services and mobile phones, therefore we experiment with both a high-end desktop machine and a mobile platform with low power and energy profile. The detailed configurations are shown in Table \ref{table:config}. The desktop computer is equipped with high-end Intel Core-i7 CPU and Nvidia Titan X GPU, while the mobile platform is the Jetson TX1 comprised of efficient Quad-core ARM A57 CPU and Nvidia GPU with Maxwell architecture and 256 CUDA cores.

\renewcommand{\arraystretch}{0.8}
\begin{table}[]
	\caption{System Configurations for desktop and mobile platforms.}
	\label{table:config}
	\vskip 0.15in
	\begin{center}
		\begin{small}
			\begin{sc}
				\begin{tabular}{lr}
					\toprule
					\multicolumn{2}{c}{Desktop} \\
					\midrule
					CPU/Main memory    & Intel Core-i7 / 32GB \\
					GPU/Memory   	   & Nvidia Titan X / 12GB \\
					DL platform & Caffe on Ubuntu 14 \\
					\midrule
					\multicolumn{2}{c}{Mobile (Nvidia Jetson TX1)} \\
					\midrule
					CPU/Main memory    & Quad ARM A57 / 4GB \\
					GPU		   & Nvidia Maxwell Arch \\
					DL platform & Caffe on Ubuntu 14 \\
					\bottomrule
				\end{tabular}
			\end{sc}
		\end{small}
	\end{center}
	\vskip -0.1in
	\vspace{-10pt}
\end{table}
\begin{figure*}[htb!]
	\centering
	\includegraphics[width=0.8\linewidth,trim=4 4 4 4, clip]{./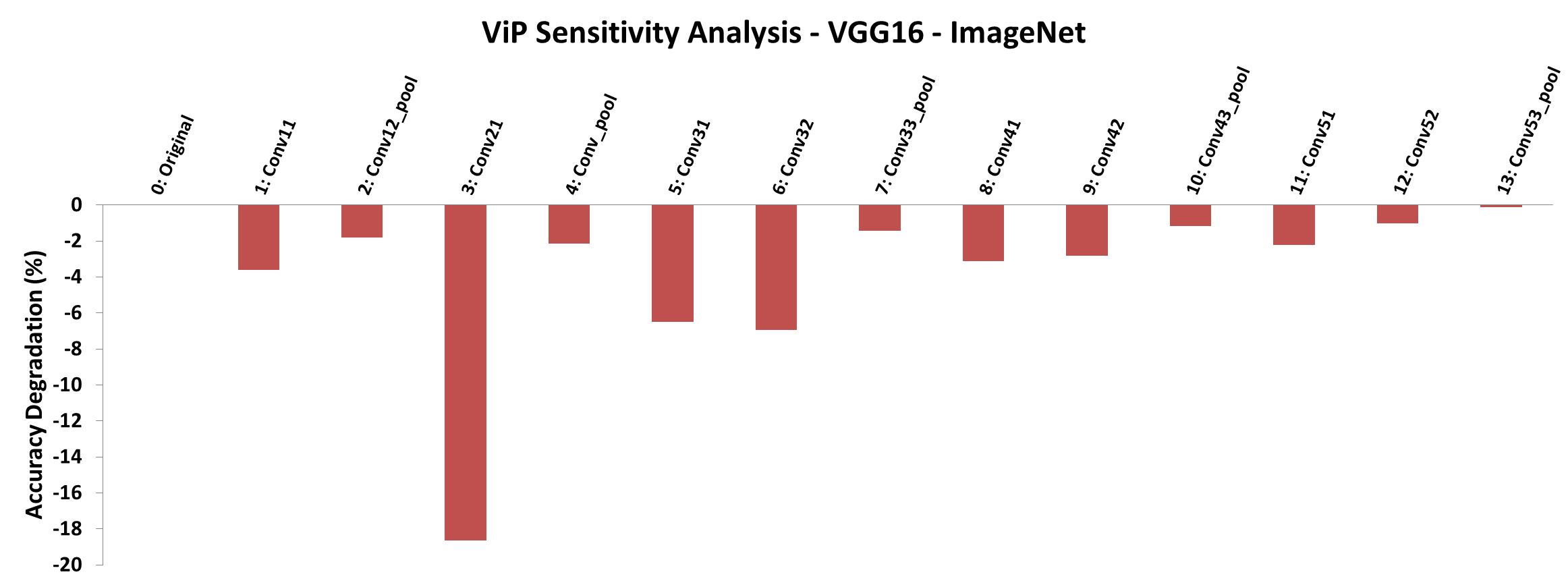}

	\caption{ViP sensitivity analysis of VGG16 model under ImageNet dataset. For each of the conv layers, we insert ViP immediately after it, and evaluate the network accuracy without finetuning. The sensitivity is measured as the accuracy degradation.}
	\label{fig:sens-vgg}
	\vspace{-10pt}
\end{figure*}
\begin{figure}[htb!]
	\centering
	\includegraphics[width=\linewidth,trim=4 4 4 4, clip]{./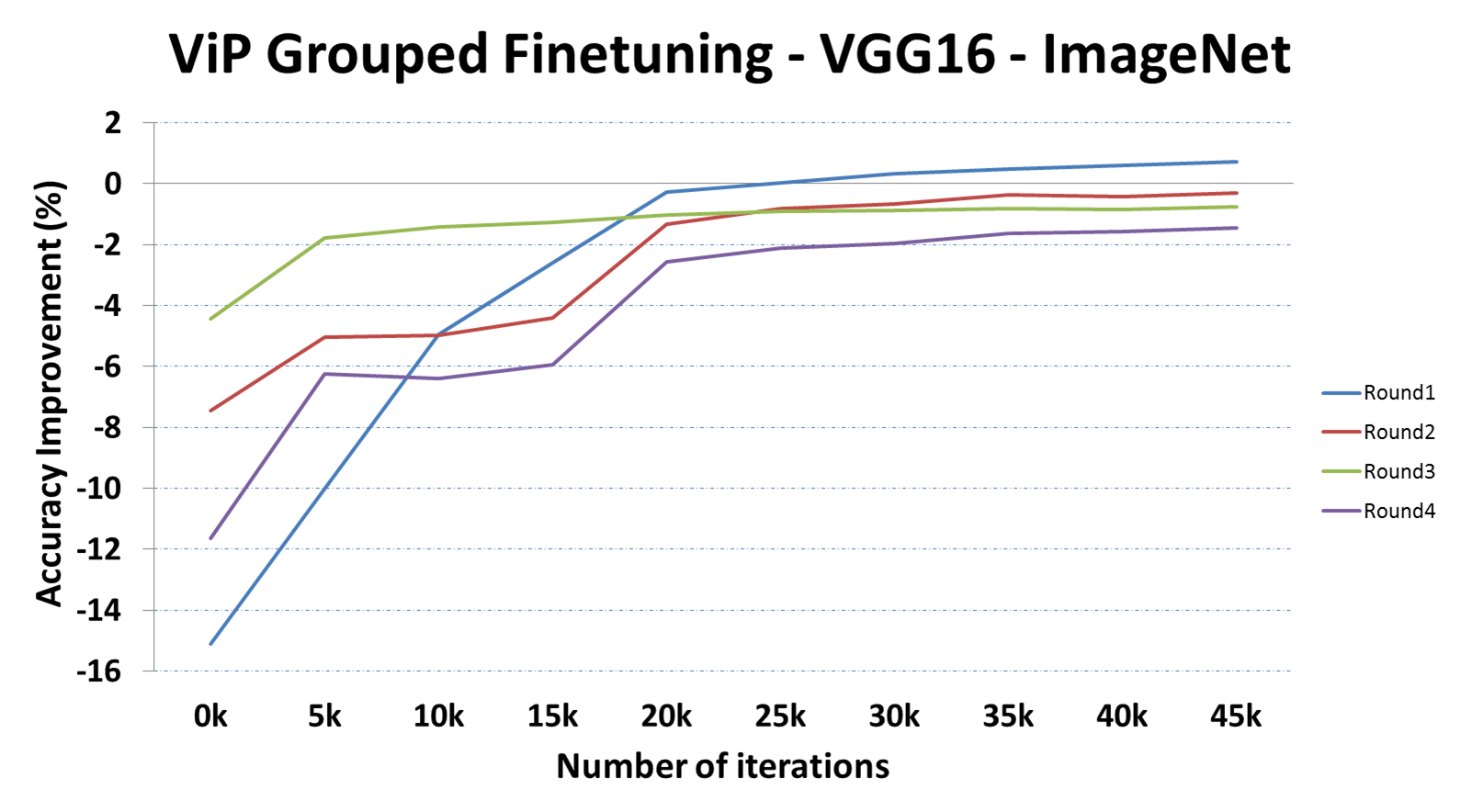}
	\vspace{-15pt}
	\caption{Four rounds of grouped finetuning of VGG16 network using ImageNet dataset.}
	\label{fig:vgg-finetune}

\end{figure}
\subsection{Image Classification}
We first apply the ViP method to speedup and reduce the energy consumption of the image classification task.
\vspace{-10pt}
\subsubsection{Accuracy and Speed}
We experiment with state-of-the-art VGG16 and ResNet-50 models using the ImageNet dataset. The pre-trained models have single-crop top-5 accuracy of 88.5\% and 91.2\%, respectively. We run each of the models 50 times and report the Noise-to-Signal Ratio (NSR) defined as the standard deviation of the measurement divided by mean. NSR measures the relative variance of the experiments and demonstrates the statistical significance of our results. We first apply the ViP method on VGG16 as described in Algorithm \ref{alg:vip} in Section \ref{Method}. We conduct sensitivity analysis to determine the per-layer sensitivity as shown in Fig.\ref{fig:sens-vgg}. The $x$-axis labels provide the names of the layers being interpolated and we explicitly append ``pool" in the name of the layers that are immediately preceding a pooling layer. As we can see, (1) after ViP insertion, different accuracy degradations without finetuning are obtained (shown on $y$-axis in Fig.\ref{fig:sens-vgg}), (2) all layers immediately preceding a pooling layer exhibit the least sensitivity to ViP operation, which was also discovered by \cite{perforatedcnns}. The reason for this is that, although ViP loses information due to interpolation, many of those interpolated values are discarded by the pooling layer, and as a result, ViP has less impact on the final output of the network. And (3) besides the pooling layers, we can see a general trend of decreasing sensitivity when we insert ViP in later-stage layers. This follows the intuition that early perturbations lead to high error on the output when propagating through multiple layers, which is mathematically shown in Eq.\ref{eq:bound}. 
The next step is to do model finetuning with progressively inserted ViP layers. We use grouped finetuning in the case of VGG16 to save training time. Specifically, we have four rounds of finetuning according to the sensitivity of the layers: (1) in round one, we insert ViP after conv layers 13, 12, 10, 7, 2 and 4; (2) in round two, we further insert ViP after conv layers 11, 9 and 8; (3) in round three, we further insert ViP after conv layer 1; (4) in the final round, we insert ViP layers after the remaining conv layers. Each round is initialized with the trained model from the previous round, because this (1) gives slightly higher accuracy than using the baseline model and (2) saves training time. 

Furthermore, we plot the training curve to illustrate how test accuracy recovers during grouped finetuning across four rounds, as shown in Fig.\ref{fig:vgg-finetune}. The zero line indicates the accuracy of the baseline network, and the $y$-axis is the accuracy improvement (degradation if negative) during finetuning. For fair comparison, we use top-5 accuracy for ImageNet throughout the paper as also reported in \cite{perforatedcnns}. The $x$-axis is the number of training iterations. We can see that after the initial insertion of ViP layers, there is a huge drop in accuracy. However, this gradually recovers during the finetuning step and even surpasses the original accuracy in round one. We conjecture that this is similar to the effect observed in \cite{han2016dsd}, where linear interpolation serves as a type of regularization that improves network generalization. 
\begin{figure}[h!]
	\centering
	
	\includegraphics[width=\linewidth,trim=4 4 4 4, clip]{./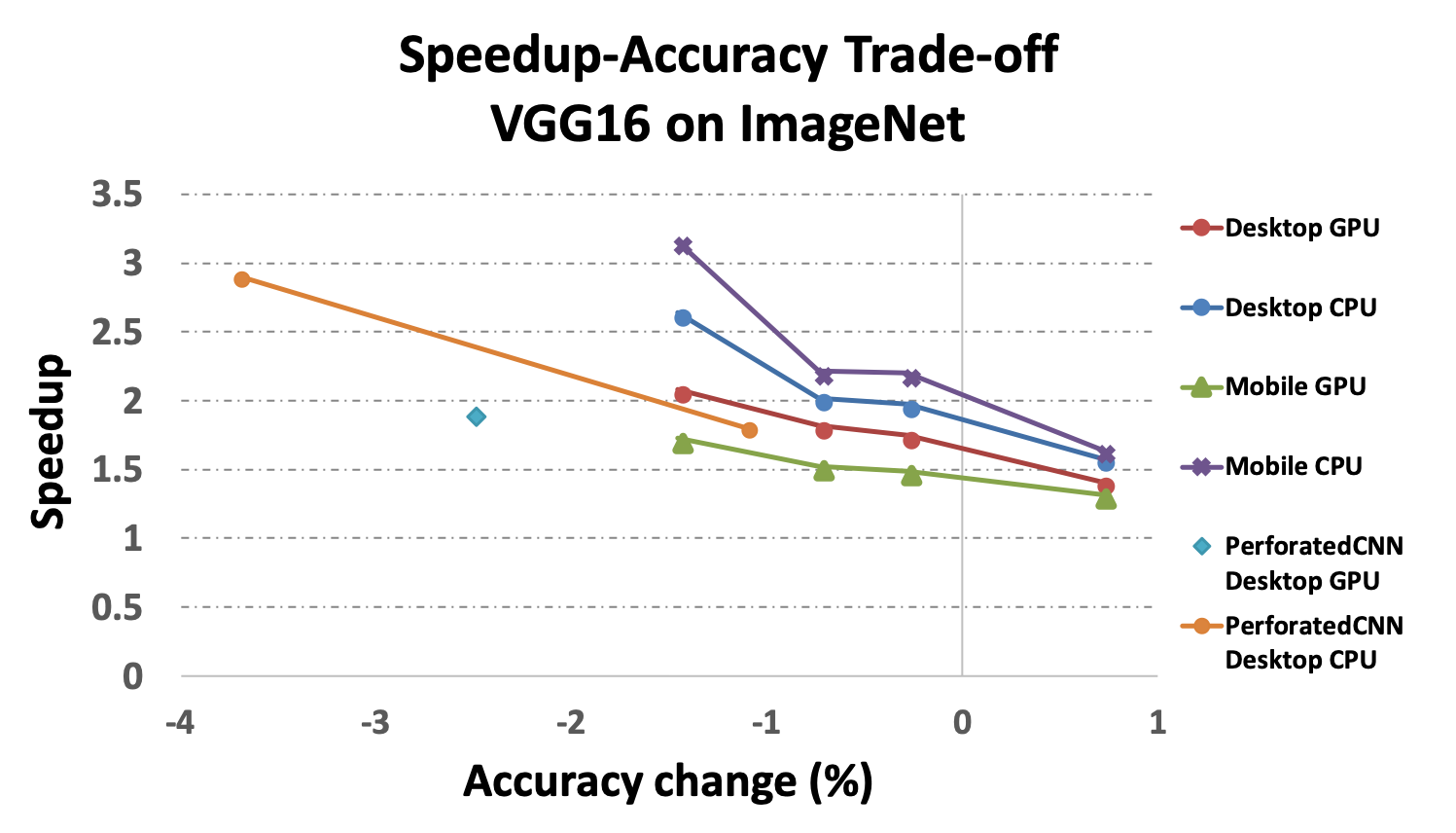}
	\vspace{-15pt}
	\caption{Speedup-Accuracy trade-off obtained by applying ViP on VGG16 model with ImageNet dataset.}
	\label{fig:speedup-vgg}
	\vspace{-10pt}
\end{figure}

After four rounds of grouped finetuning, we obtain four models of different speedup-accuracy trade-offs. 
A positive value for accuracy change means improvement, while a negative value means accuracy drop. Speedup is measured as the ratio of the inference time of the original model over the inference time of the model with ViP. We finetune the model with ViP on the desktop machine, because (1) storage of the mobile platform is insufficient for holding the entire ImageNet dataset, (2) training on desktop machine is significantly faster and the trained model can be evaluated on both desktop and mobile platforms for runtime analysis, and (3) model accuracy is platform-independent, which means once a model is obtained, its test accuracy remains the same on any platform. Accordingly, we can report accuracy and speedup on both desktop and mobile platforms, while we only train the model on the desktop machine once. 

We plot the results in Fig.\ref{fig:speedup-vgg} along with the result of the previous state-of-the-art PerforatedCNNs \cite{perforatedcnns}. Our method achieves \textbf{2.1x} speedup with less than $1.5\%$ accuracy drop, while PerforatedCNNs can theoretically achieve $1.9$x speedup with $2.5\%$ accuracy degradation. The measured speedup of PerforatedCNNs is 2x when considering the reduced memory cost through implicit interpolation in Matlab \cite{perforatedcnns}. In the same way, ViP can also reduce memory transfer cost between layers thanks to the smaller-sized intermediate outputs by using larger-stride convolution. Unfortunately, Caffe does not support implicit interpolation and hence no memory saving of intermediate outputs as pointed out by PerforatedCNNs \cite{perfcaffe}. For fair comparison, we eliminate the effect of memory saving in both implementations and use the theoretical upper-limit for PerforatedCNNs speedup since they did not report speedup on Caffe. We expect ViP method to achieve even higher speedup in implementations that support implicit interpolation which saves memory transfer cost. Comparing ViP and PerforatedCNNs on desktop CPU, we can see that ViP (blue curve) is better than PerforatedCNNs (orange line) in terms of Pareto optimality, because models closer to upper-right corner deliver better trade-off between low accuracy drop and high speedup. In the case of mobile CPU, ViP is able to speed up the CNN by \textbf{3.16x} with less than $1.5\%$ accuracy drop. NSR of VGG16 latency on CPU and GPU is 0.6\% and 0.1\%, respectively, which is negligible and shows that our speedup results are reliable. Besides, what ViP can obtain is a set of models with different speedup-accuracy trade-offs rather than a single configuration, CNN practitioners can pick any of the models in Fig.\ref{fig:speedup-vgg} that meets their need. 
\begin{figure*}[h!]
	\centering
	\includegraphics[width=\textwidth,trim=4 10 4 4, clip]{./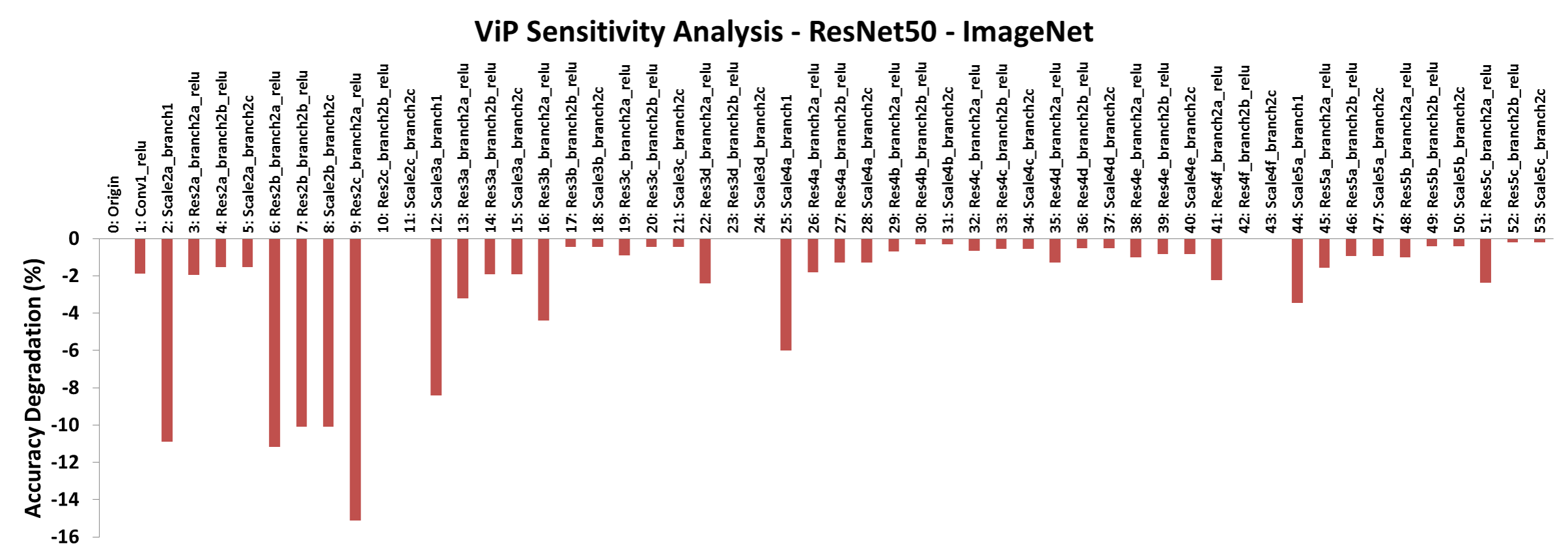}
	\vspace{-15pt}
	\caption{ViP sensitivity analysis of ResNet-50 model under ImageNet dataset. For each of the conv layers, we insert ViP immediately after it, and evaluate the network accuracy without finetuning. The sensitivity is measured as the accuracy degradation.}
	\label{fig:sens-res}
	\vspace{-15pt}
\end{figure*}
Similarly, we apply ViP on ResNet-50 under ImageNet dataset. Fig.\ref{fig:sens-res} shows the results on sensitivity analysis and again we see the trend of decreasing sensitivity in later-stage layers. We have in total 53 conv layers because there are 49 conv layers on the primary branch and four on the bypass branches. Initially, we apply three rounds of grouped finetuning on ResNet-50. However, the final round, consisting of layers with the highest sensitivity, results in a steep accuracy drop, from $-0.7\%$ to $-3.94\%$, we decide to use per-layer finetuning for the 12 layers in the last round to demonstrate the fine-grained progressive change in both accuracy and speed. Fig.\ref{fig:speedup-res} shows the results. As expected, there is a clear trend of increasing speedup with higher accuracy drop when we insert more ViP layers. The speedup of mobile GPU and desktop GPU almost overlaps, and they both achieve \textbf{1.53x} speedup with less than $4\%$ accuracy degradation. Meanwhile, mobile CPU obtains \textbf{2.3x} speedup at same level of accuracy. NSR of Resnet-50 latency on CPU and GPU is 0.5\% and 0.05\%, respectively, which is again negligible and shows that our speedup results are reliable.
\begin{figure}[h!]

	\centering
	\includegraphics[width=\linewidth,trim=4 4 4 4, clip]{./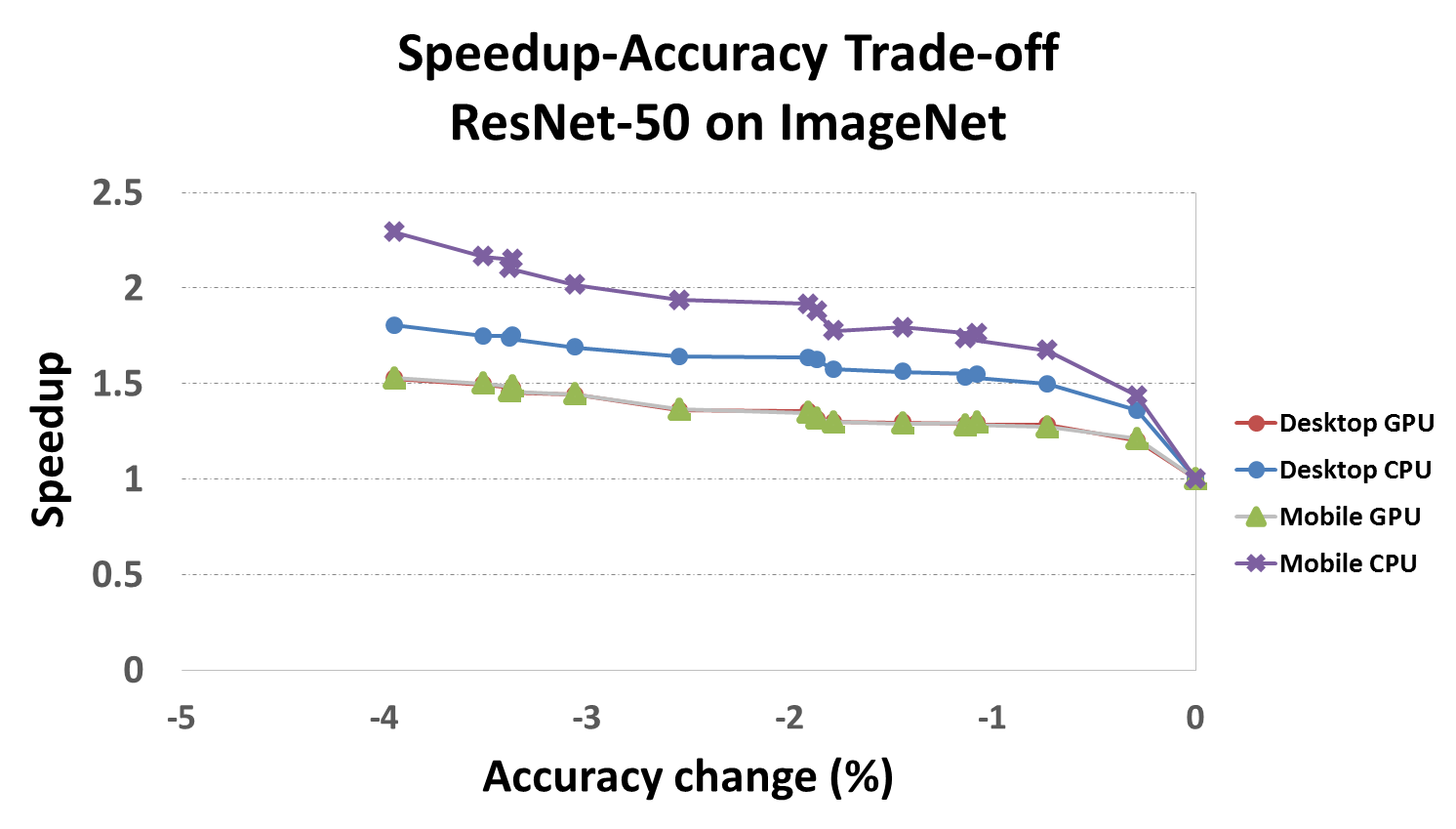}
	\vspace{-15pt}
	\caption{Speedup-Accuracy trade-off obtained by applying ViP on ResNet-50 model with ImageNet dataset.}
	\label{fig:speedup-res}
	\vspace{-5pt}
\end{figure}

Our results on the All-CNN network \cite{allcnn} show a \textbf{1.77x} speedup on the desktop GPU and up to \textbf{3.03x} speedup on the mobile CPU, while the top-1 accuracy drop is within $4\%$. Details are provided in the supplementary material.
\vspace{-10pt}

\begin{figure}[htb!]
	
	\centering
	\includegraphics[width=\linewidth,trim=4 4 4 4, clip]{./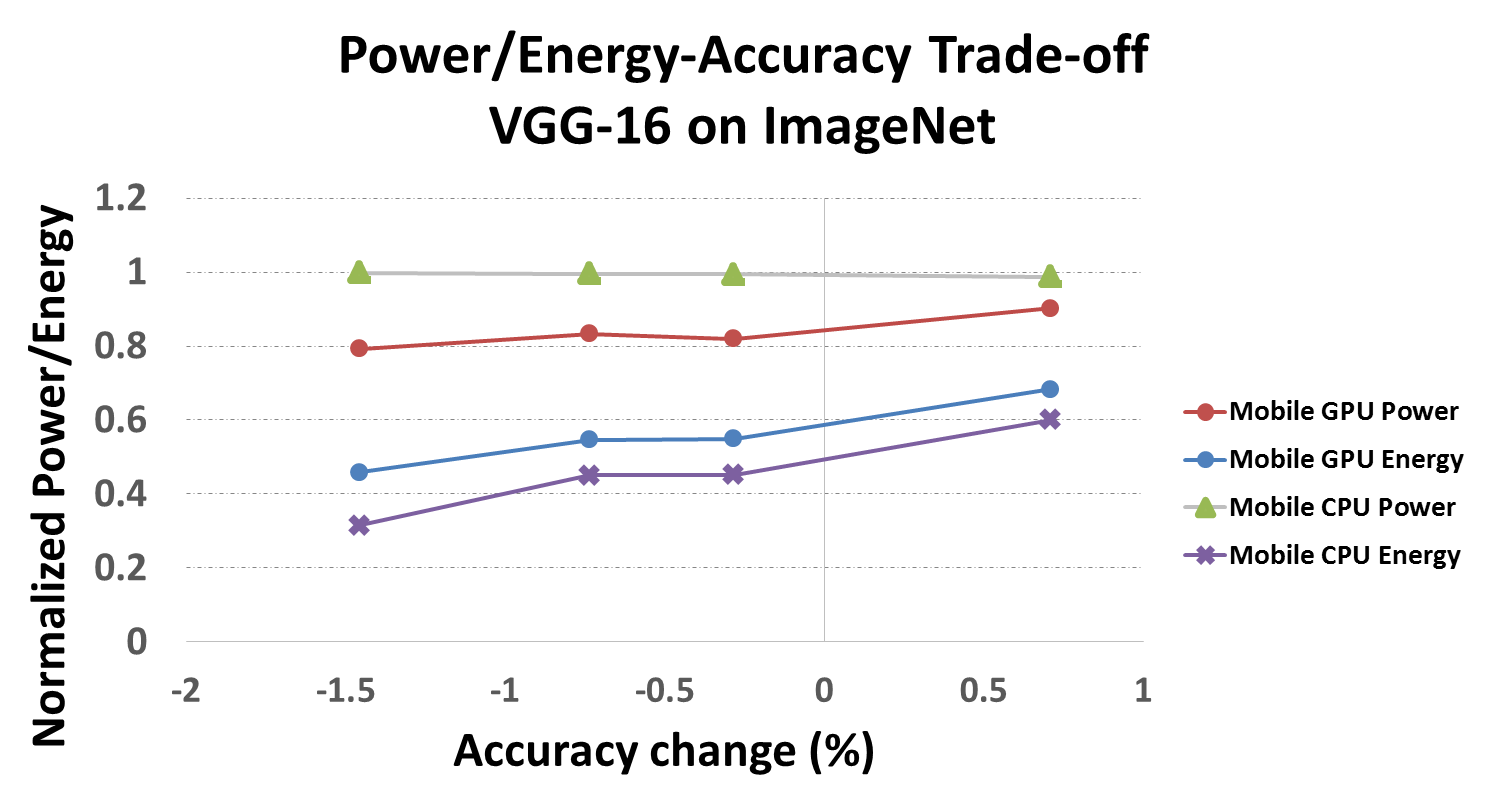}
	\vspace{-15pt}
	\caption{Power/Energy-Accuracy trade-off obtained by applying ViP on VGG16 model with ImageNet dataset.}
	\label{fig:power-vgg}
	\vspace{-10pt}
\end{figure}

\subsubsection{Power and Energy}
More and more mobile apps start to utilize CNNs to improve their image classification and object detection functions. Other than speed, power and energy are the most critical constraints on mobile platforms. Therefore, we further conduct experiments to show how ViP improves the power and energy profile on mobile platforms running CNN.

\begin{figure}[htb!]
	\centering
	\includegraphics[width=\linewidth,trim=4 4 4 4, clip]{./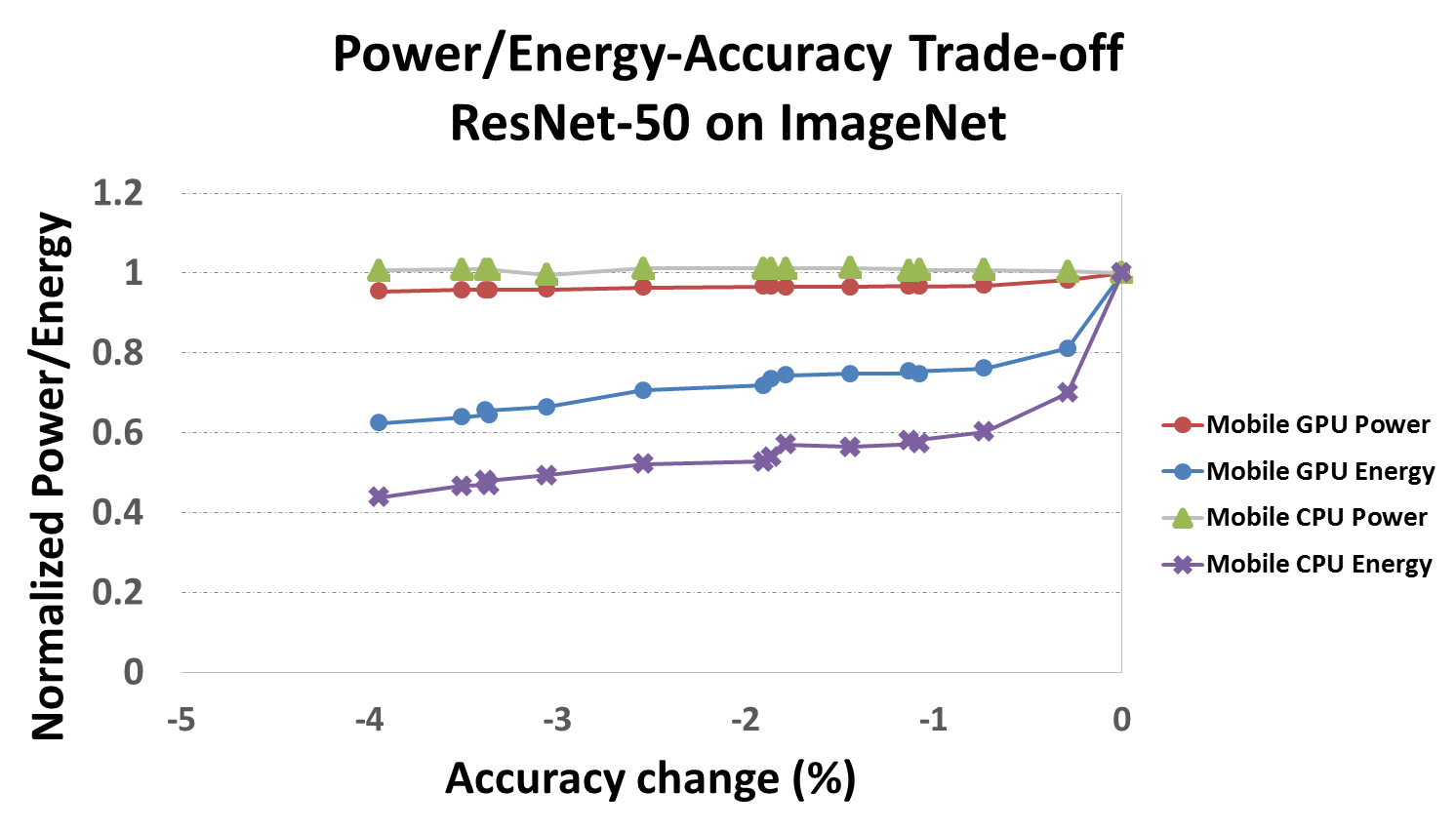}

	\caption{Power/Energy-Accuracy trade-off obtained by applying ViP on ResNet-50 model with ImageNet dataset.}
	\label{fig:power-res}
	\vspace{-10pt}
\end{figure}

We first port both Caffe and our custom ViP layer to Jetson TX1. We use the on-board sensor to measure the power consumption of CNNs with and without ViP, and obtain the energy consumption by multiplying power by CNN latency. We test on all CNNs used previously, \textit{i.e.}, All-CNN, VGG16 and ResNet-50 (Detailed results on All-CNN are included in the supplementary material). Each test is run 50 times and we report the mean power/energy-accuracy trade-off curves in Figures \ref{fig:power-vgg} and \ref{fig:power-res}, respectively. In each of the figures, we show four curves for power and energy consumption of either running on mobile CPU or mobile GPU. With ViP applied, both models show power reduction on mobile GPU, with VGG16 saving up to \textbf{21\%}. VGG16 and ResNet-50 achieve up to \textbf{55\%} and \textbf{38\%} mobile GPU energy reduction, respectively. Furthermore, VGG16 achieves up to \textbf{~70\%} CPU energy reduction while ResNet-50 tops at around \textbf{60\%}. In terms of measurement variance, NSR of ResNet-50 power/energy on mobile CPU and GPU is 3.2\% and 9.8\%, respectively, while the NSR of VGG16 power/energy on mobile CPU and GPU is 2.9\% and 12.1\% respectively. As the variance on CPU is negligible, and also small enough on GPU, with high confidence, ViP saves power and energy on both platforms.
\vspace{-10pt}

\subsubsection{ViP for Compressed CNNs}
To demonstrate that ViP is complementary to other acceleration approaches, we apply ViP to ThiNet \cite{luo2017thinet}, a compressed VGG16 via state-of-the-art filter level pruning method, and show greater speedup and energy saving can be achieved when applying both ViP and network compression. ThiNet is only 6\% the size of the original VGG16 \cite{luo2017thinet} and our measured latency on desktop GPU shows a 2.75x speedup over VGG16 with minor accuracy drop. We apply ViP on top of ThiNet, and three rounds of finetuning are carried out after the sensitivity analysis. Fig.\ref{fig:speedup-thinet} shows the results of speedup and accuracy drop on desktop GPU, relative to ThiNet, after each round of finetuning. We can see that ViP achieves 30\% speedup with 1.3\% accuracy drop and can reach up to 1.9x speedup on top of the already heavily compressed ThiNet. By combining both ViP and compression, we can drastically speedup CNN by a factor of 5.23x. Fig.\ref{fig:power-thinet} shows the normalized power and energy consumption on both mobile CPU and GPU after applying ViP on ThiNet. ViP further reduces the energy consumption of ThiNet by up to 60\% when running on mobile GPU. These results demonstrate that ViP is indeed a complementary method to the existing acceleration approaches, and when we apply both compression and ViP, we can achieve greater speedup and energy saving.
\begin{figure}[h!]
	\centering	
	\includegraphics[width=\linewidth,trim=4 4 4 4, clip]{./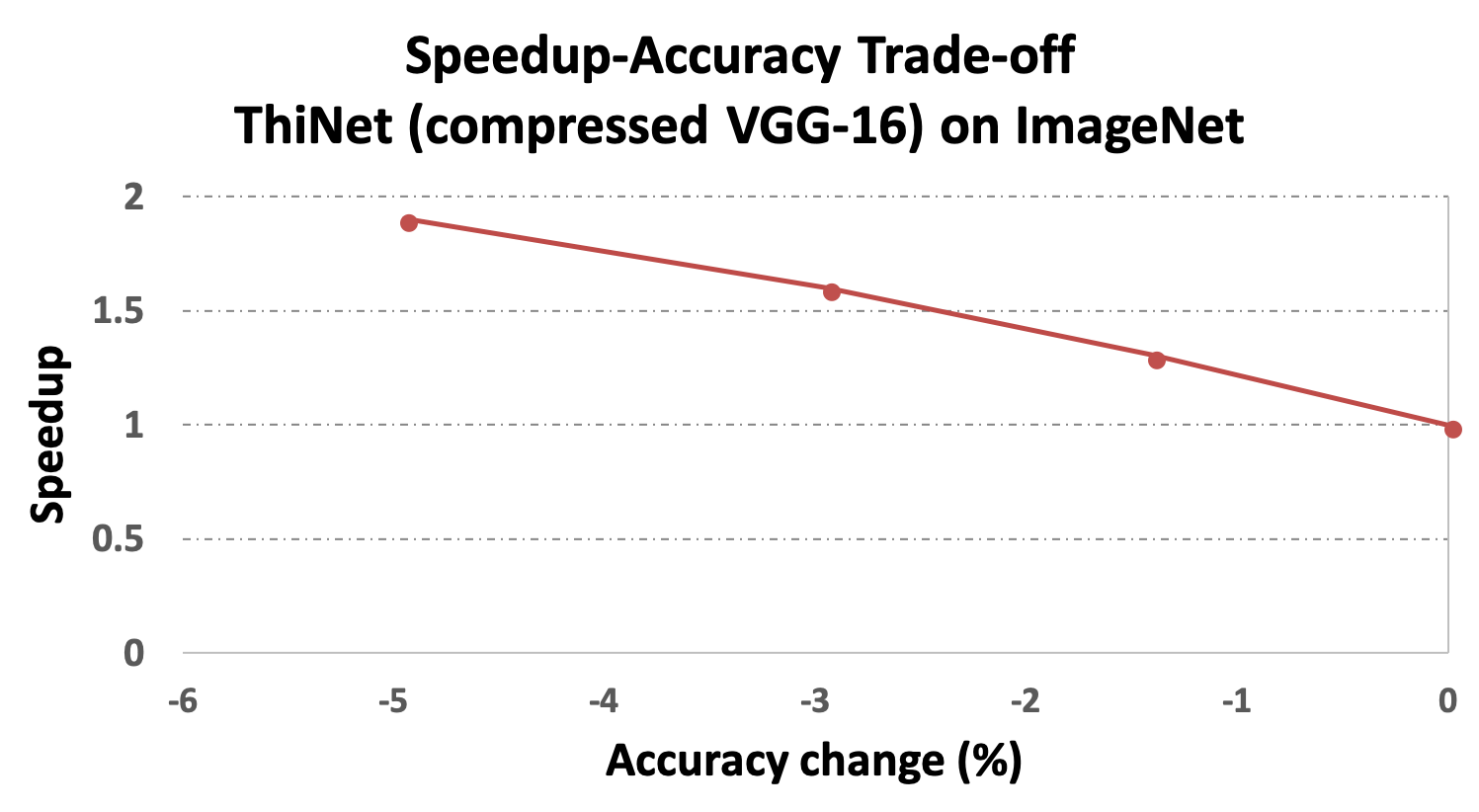}
	\vspace{-15pt}
	\caption{Speedup-Accuracy trade-off obtained by applying ViP on ThiNet (compressed VGG16) model with ImageNet dataset.}
	\label{fig:speedup-thinet}
	\vspace{-10pt}
\end{figure}

\begin{figure}[h!]
	\centering	
	\includegraphics[width=\linewidth,trim=4 4 4 4, clip]{./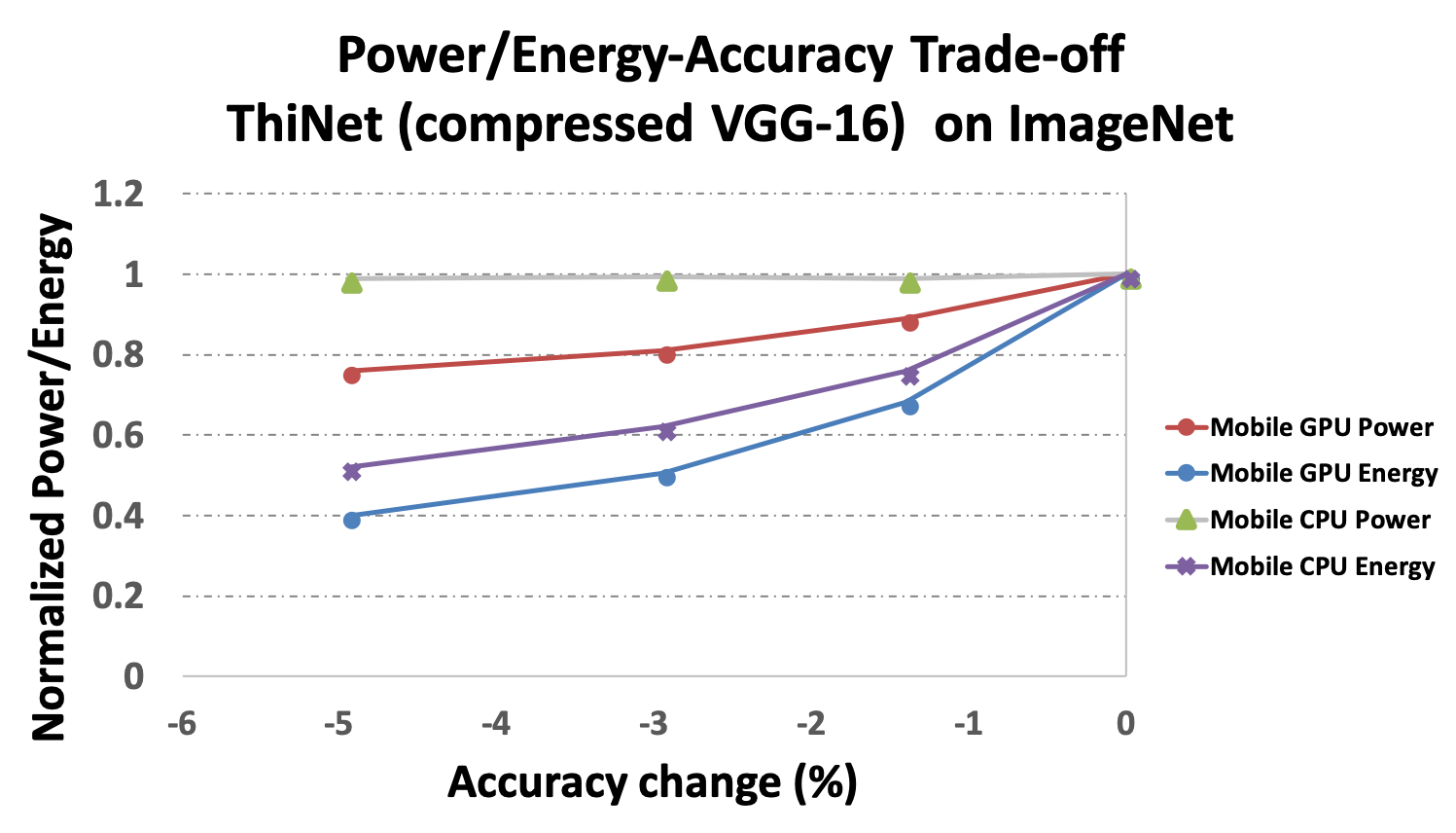}
	\vspace{-15pt}
	\caption{Power/Energy-Accuracy trade-off obtained by applying ViP on ThiNet (compressed VGG16) with ImageNet dataset.}
	\label{fig:power-thinet}
	\vspace{-10pt}
\end{figure}

\subsection{Object Detection}
Much of the prior work on CNN acceleration only studies image classification \cite{wen2016learning,lightnn,lebedev2014speeding}, while object detection is often a more practical and interesting application. Although the two tasks share some common features, object detection has its unique components and challenges, \textit{e.g.}, region proposal, bounding box regression, \textit{etc}. Thus, without experimental results, it is hardly convincing to infer that methods excel on classification can also work well on detection tasks. Accordingly, in this section, we further test ViP on object detection and show that it works across both important tasks.
\vspace{-10pt}
\begin{figure}[htb!]
	\centering
	\includegraphics[width=\linewidth,trim=4 4 4 4, clip]{./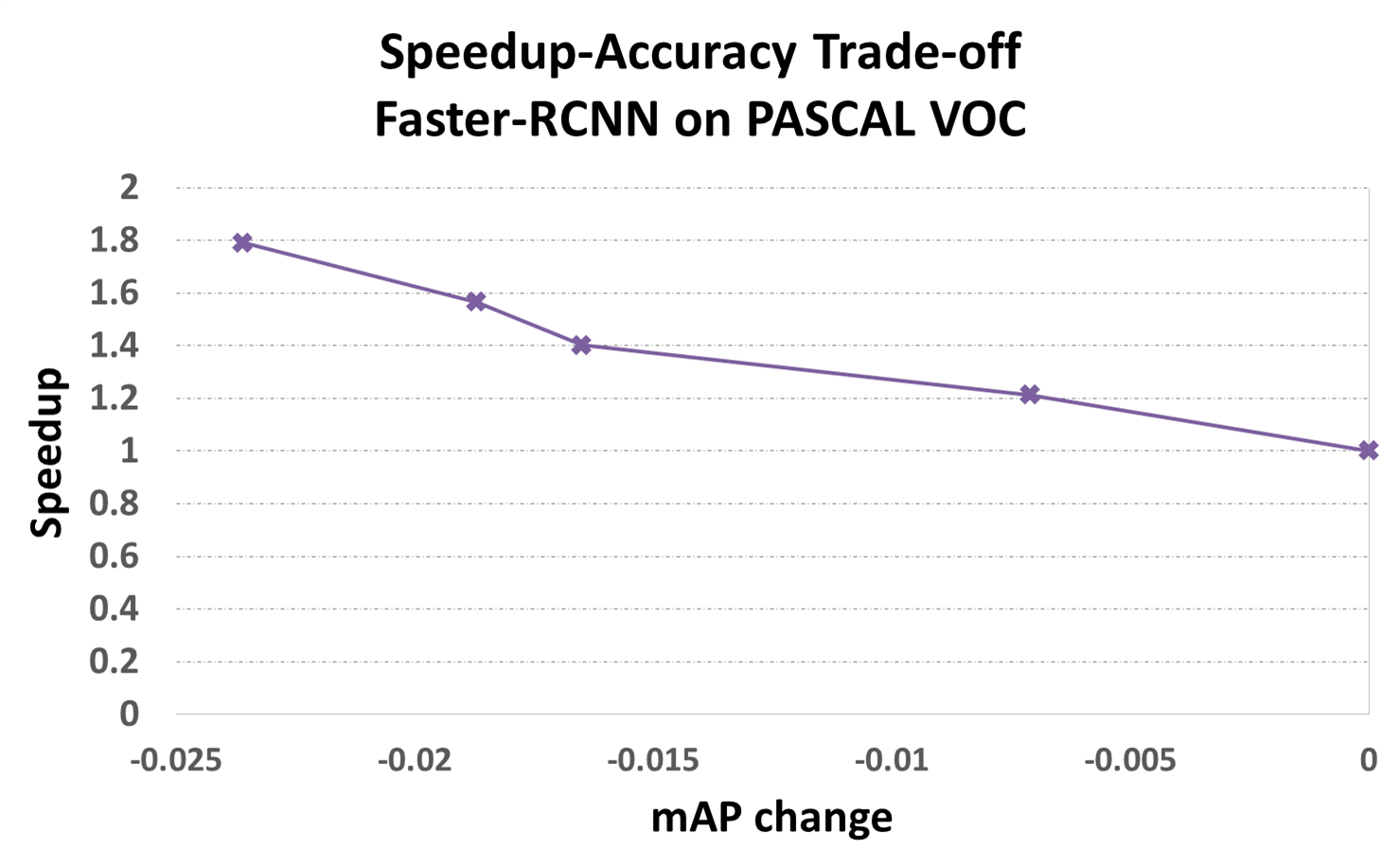}
	\vspace{-10pt}
	\caption{Speedup-Accuracy trade-off obtained by applying ViP on faster-rcnn with VGG16 backbone under PASCAL VOC 2007.}
	\label{fig:speedup-rcnn}
	
\end{figure}

We use the Caffe implementation of the state-of-the-art object detection framework faster-rcnn \cite{faster-rcnn} with PASCAL VOC 2007 dataset, and integrate it with our custom ViP layer. The pre-trained faster-rcnn, with VGG16 as backbone, has an accuracy of 69.5\% mAP. 
We conduct four rounds of grouped finetuning after sensitivity analysis (detailed in supplementary material), and as expected, with more layers followed by ViP operation, we are able to achieve higher speedup but with higher mAP degradation as shown in Fig.\ref{fig:speedup-rcnn}. In the end, we apply ViP to all conv layers and achieve \textbf{1.8x} speedup with $0.025$ mAP drop. 
\vspace{-5pt}
\section{Conclusion}
\label{conclusion}
In this work, we propose the Virtual Pooling (ViP) method that combines downsampling, efficient upsampling and sensitivity-based grouped finetuning, with a provable bound for speeding up CNNs with low accuracy drop. We validate our method extensively on four CNN models, three representative datasets, both desktop and mobile platforms, and on both image classification and object detection tasks. ViP is able to speedup VGG16 by \textbf{2.1x} with less than $1.5\%$ accuracy drop, and speedup faster-rcnn by \textbf{1.8x} with $0.025$ mAP degradation. Combining ViP and model compression leads to a \textbf{5.23x} speedup on VGG16. Furthermore, ViP generates a set of models with different speedup-accuracy trade-offs. This provides CNN practitioners a tool for finding the model best suiting their needs.
\\

This research was supported in part by National Science Foundation CNS Grant No. 1564022. Zhuo Chen acknowledges support from Qualcomm Innovation Fellowship.

{\small
\bibliographystyle{ieee}
\bibliography{egbib}
}

\end{document}